\newtheorem{theorem}{Theorem}
\newtheorem{lemma}{Lemma}
\newcommand{\E}{\mathbb{E}}
\newcommand{\ceil}[1]{\left\lceil#1\right\rceil}
\newcommand{\Zcal}{\mathcal{Z}}
\newcommand{\secref}[1]{Sec.~\ref{#1}}
\renewcommand{\eqref}[1]{Eq.~(\ref{#1})}
\newcommand{\lemref}[1]{Lemma~\ref{#1}}
\newcommand{\thmref}[1]{Thm.~\ref{#1}}
\newcommand{\algref}[1]{Algorithm~\ref{#1}}
\newcommand{\argmin}{\mathop{\mathrm{arg\,min}{}}}
\newcommand{\varp}{\sigma}
\newcommand{\smoothp}{L}
\newcommand{\convexset}{W}
\newcommand{\regretbound}{\psi}
\title{Robust Distributed Online Prediction}
\author{Ofer Dekel, Ran Gilad-Bachrach, Ohad Shamir and Lin Xiao \\ Microsoft Research \\ \texttt{\{oferd,rang,ohadsh,lin.xiao\}@microsoft.com}}
\date{}
\begin{document}
\maketitle

\abstract
The standard model of online prediction deals with serial processing of inputs by a single processor. However, in large-scale online prediction problems, where inputs arrive at a high rate, an increasingly common necessity is to distribute the computation across several processors. A non-trivial challenge is to design distributed algorithms for online prediction, which maintain good regret guarantees. In \cite{DMB}, we presented the DMB algorithm, which is a generic framework to convert any serial gradient-based online prediction algorithm into a distributed algorithm. Moreover, its regret guarantee is asymptotically optimal for smooth convex loss functions and stochastic inputs. On the flip side, it is fragile to many types of failures that are common in distributed environments. In this companion paper, we present variants of the DMB algorithm, which are resilient to many types of network failures, and tolerant to varying performance of the computing nodes.

\section{Introduction}
In online prediction problems, one needs to provide predictions over a stream of inputs, while attempting to learn from the data and improve the predictions. Unlike offline settings, where the learning phase over a training set is decoupled from the testing phase, here the two are intertwined, and we cannot afford to slow down.

The standard models of online prediction consider a serial setting, where the inputs arrive one by one, and are processed by a single processor. However, in large-scale applications, such as search engines and cloud computing, the rate at which inputs arrive may necessitate distributing the computation across multiple cores or cluster nodes. A non-trivial challenge is to design distributed algorithms for online prediction, which maintain regret guarantees as close as possible to the serial case (that is, the ideal case where we would have been able to process all inputs using a single, sufficiently fast processor).

In \cite{DMB}, we presented the DMB algorithm, which is a template that allows to convert any serial online learning algorithm into a distributed algorithm. For a wide class of such algorithms, \cite{DMB} showed that when the loss function is smooth and the inputs are stochastic, then the DMB algorithm is asymptotically optimal. Specifically, the regret guarantee of the DMB algorithm will be identical in its leading term to the regret guarantee of the serial algorithm, including the constants. Also, the algorithm can be easily adapted to stochastic optimization problems, with an asymptotically optimal speedup in the convergence rate, by using a distributed system as opposed to a single processor.

However, the DMB algorithm 
%is also a fragile algorithm. It 
makes several assumption that may not be realistic in all distributed settings.
These assumptions are:
\begin{itemize}
\item All nodes work at the same rate.
\item All nodes are working properly throughout the execution of the algorithm.
\item The network connecting the nodes is stable during the execution of the algorithm.
\end{itemize}
These assumptions are not always realistic. Consider for example a multi-core CPU. While the last two assumptions are reasonable in this environment, the first one is invalid since other processes running on the same CPU may cause occasional delays on some cores (e.g., \cite{PKP03}). In massively distributed, geographically dispersed systems, all three assumptions may fail to hold.

In this companion paper to \cite{DMB}, we focus on adding robustness to the DMB algorithm, and present two methods to achieve this goal. In \secref{sec:leader} we present ways in which the DMB algorithm can be made robust using a master-workers architecture, and relying on the robustness of off-the-shelf methods such as leader election algorithms or databases.
In \secref{sec:async}, we present an asynchronous version of the DMB algorithm 
that is robust with a fully decentralized architecture.
% which has inherent robustness without relying on external methods.

\section{Background}

We begin by providing a brief background on the setting and the DMB algorithm. The background is deliberately terse, and we refer the reader to \cite{DMB} for the full details.

We assume that we observe a stream of inputs $z_{1},z_{2},\ldots$, where
each~$z_{i}$ is sampled independently from a fixed unknown
distribution over a sample space $\Zcal$. Before observing each
$z_{i}$, we predict a point $w_{i}$ from a convex set~$\convexset$.  After
making the prediction~$w_{i}$, we observe~$z_{i}$ and suffer the loss
$f(w_{i},z_{i})$, where $f$ is a predefined loss function, assumed to be convex in its first argument.  We may now
use~$z_{i}$ to improve our prediction mechanism for the future (e.g.,
using a stochastic gradient method).  The goal is to accumulate the
smallest possible loss as we process the sequence of inputs.  More
specifically, we measure the quality of our predictions on $m$ examples using the notion of \emph{regret}, defined as
\[
R(m) = \sum_{i=1}^m \left( f(w_i,z_i) - f(w^\star,z_i) \right),
\]
where $w^\star = \argmin_{w \in \convexset} \E_z[f(w,z)]$. 
Note that the regret $R(m)$is a random variable, since it depends on~$m$
stochastic inputs. 
For simplicity, we will focus on bounding the expected regret.

We model our distributed computing system as a set of nodes, each of which is an independent processor, and a network that enables the nodes to communicate with each other. Each node receives an incoming stream of examples from an outside source, such as a load balancer/splitter. As in the real world, we
assume that the network has a limited bandwidth, so the nodes cannot
simply share all of their information, and that messages sent over the
network incur a non-negligible latency. 

The ideal (but unrealistic) solution to this online prediction problem is to run a standard serial algorithm on a single ``super'' processor that is sufficiently fast to handle the stream of examples. This solution is optimal, simply because any distributed algorithm can be simulated on a fast-enough single processor. The optimal regret that can be achieved by such serial algorithms on $m$ inputs is $O(\sqrt{m})$. However, when we choose to distribute the computation, the regret performance might degrade, as the communication between the nodes is limited. Straightforward approaches, as well as previous approaches in the literature, all yield regret bounds which are at best $O(\sqrt{km})$, where $k$ is the number of nodes in the system. Thus, the regret degrades rapidly as more nodes are utilized. 

In \cite{DMB}, we present the DMB algorithm, which has the following two important properties:
\begin{itemize}
\item It can use a wide class of gradient-based update rule for serial online
  prediction as a black box, and convert it into a parallel or
  distributed online prediction algorithm.
These serial online algorithms include (Euclidean) gradient descent, 
mirror descent, and dual averaging.
\item 
If the loss function~$f(w,z)$ is smooth in~$w$ (namely, its gradient is Lipschitz), then the DMB algorithm attains an asymptotically optimal regret bound of $O(\sqrt{m})$.  Moreover, the
  coefficient of the dominant term~$\sqrt{m}$ is the same as in the
  serial bound, which is \emph{independent} of~$k$ and of the network
  topology.
\end{itemize}

The DMB algorithm is based on a theoretical observation that, for smooth loss
functions, one can prove regret bounds for serial gradient-based algorithms 
that depend on the variance of the stochastic gradients. 
To simplify discussions, we use $\psi(\sigma^2,m)$ to denote such variance 
bounds for predicting~$m$ inputs, where $\sigma^2$ satisfies
\[
\forall\, w \in \convexset, \qquad \E_z \left[ \bigl\| \nabla_w f(w,z)
- \nabla_w \E_z f(w,z) ] \bigr\|^2 \right] \leq \varp^2 ~.
\]
For example, we show in \cite{DMB} that for both mirror-descent (including 
classical gradient descent) and dual averaging methods, the expected regret
bounds take the form
\[
\psi(m,\sigma^2) = 2D^2\smoothp+2D\varp\sqrt{m},
\]
where $\smoothp$ is the Lipschitz parameter of the loss gradient $\nabla_w f(w,z)$, and $D$ quantifies the size of the set $\convexset$ from which the predictors are chosen. As a result, it can be shown that applying a serial gradient-based algorithm on \emph{averages} of gradients, computed on independent examples with the same predictor $w$, will reduce the variance in the resulting regret bound.

In a nutshell, the DMB algorithm uses the distributed network in order to rapidly accumulate gradients with respect to the same fixed predictor $w$. Once a mini-batch of sufficiently many gradients are accumulated (parameterized by $b$), the nodes collectively perform a vector-sum operation, which allows each node to obtain the average of these $b$ gradients. This average is then used to update their predictor, using some gradient-based online update rule as a black box. Note that the algorithm is inherently synchronous, as all nodes must use the same predictor and perform the averaging computations and updates at the same time. A detailed pseudo-code and additional details appear in \cite{DMB}.

The regret analysis for this algorithm is based on a parameter $\mu$, 
which bounds the number of inputs processed by the system during the 
vector-sum operation. 
The gradients for these~$\mu$ inputs are not used for updating the predictor.
While $\mu$ depends on the network structure and communication latencies, it does not scale with the total number of examples $m$ processed by the system. Formally, the regret guarantee is as follows:
\begin{theorem} \label{thm:synchronous}
Let $f$ be an $\smoothp$-smooth convex loss function and assume that
the stochastic gradient $\nabla_w f(w,z_i)$ has $\varp^2$-bounded variance
for all~$w\in\convexset$.
If the online update rule used by the DMB algorithm has the serial regret bound $\regretbound(\varp^2, m)$, then the expected regret of the DMB algorithm over $m$ examples is at most
\[
(b+\mu)\,\regretbound\left(\frac{\varp^2}{b},
\left\lceil\frac{m}{b+\mu}\right\rceil\right) ~.
\]
Specifically, if $\regretbound(\varp^2, m)=2D^2\smoothp+2D\varp\sqrt{m}$,
and the batch size is chosen to be $b=m^\rho$ for any $\rho\in (0,1/2)$, the expected regret is $2D\sigma\sqrt{m} + o(\sqrt{m})$.
\end{theorem}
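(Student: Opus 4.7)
The plan is to reduce the distributed execution on $m$ examples to a serial run on only $T := \lceil m/(b+\mu)\rceil$ ``effective'' updates, each driven by a variance-reduced gradient, so that the hypothesized serial regret bound $\psi$ can be applied as a black box. Concretely, I would partition the input stream into $T$ consecutive rounds of $b+\mu$ examples each (padding the last round if necessary, which can only increase the bound). Within round~$t$, DMB predicts the same $w_t$ on every example of the round; the gradients of $b$ of those examples are averaged into $g_t := \frac{1}{b}\sum_{i=1}^b \nabla_w f(w_t,z_{t,i})$, while the remaining $\mu$ gradients (those arriving during the vector-sum) are discarded, and $w_{t+1}$ is produced from $w_t$ and $g_t$ by the underlying serial update rule.

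The key step is to argue that the sequence $(w_t,g_t)_{t=1}^T$ is indistinguishable from a serial run of the same update rule on a length-$T$ stream with stochastic gradients of variance $\sigma^2/b$. Since the $b$ examples used at round~$t$ are i.i.d.\ and independent of $w_t$, conditional on $w_t$ we have $\E[g_t\mid w_t] = \nabla_w F(w_t)$ and $\E\bigl[\|g_t - \nabla_w F(w_t)\|^2\mid w_t\bigr] \le \sigma^2/b$, where $F(w):=\E_z f(w,z)$. Invoking the variance-dependent serial regret guarantee for smooth losses developed in \cite{DMB} then yields
\[
\sum_{t=1}^T \E\bigl[F(w_t) - F(w^\star)\bigr] \;\le\; \psi\!\left(\frac{\sigma^2}{b},\,T\right).
\]

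To convert this into a bound on $\E[R(m)]$, I would use that within a round $w_t$ is fixed and independent of the $b+\mu$ examples of that round, so each round contributes $(b+\mu)\,\E[F(w_t)-F(w^\star)]$ in expectation. Summing over $t$ gives $\E[R(m)] \le (b+\mu)\,\psi(\sigma^2/b, T)$, which is the first assertion. For the ``specifically'' part, substituting $\psi(\sigma^2,m) = 2D^2 L + 2D\sigma\sqrt{m}$ into this bound yields
\[
(b+\mu)\cdot 2D^2 L \;+\; 2D\sigma\sqrt{m}\cdot\sqrt{1 + \mu/b}.
\]
With $b=m^\rho$ and $\mu$ constant, the first summand is $O(m^\rho)$ and the square root equals $1 + O(m^{-\rho})$, so the second summand is $2D\sigma\sqrt{m} + O(m^{1/2-\rho})$; both deviations from $2D\sigma\sqrt{m}$ are $o(\sqrt{m})$ precisely when $\rho\in(0,1/2)$.

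The main obstacle is the invocation in the second paragraph: it is not automatic from an arbitrary serial regret bound but rests on the property (established in \cite{DMB} using smoothness of $f$) that for the gradient-based methods in question, the regret depends only on the variance of the gradient estimates, not on their specific form. Without this property one could not cleanly replace $\sigma^2$ by $\sigma^2/b$ inside $\psi$ when substituting the averaged gradient $g_t$ for the true stochastic gradient.
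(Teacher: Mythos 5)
Your proposal is correct and follows exactly the argument that this paper sketches (and defers to the companion paper \cite{DMB} for): group the stream into rounds of $b+\mu$ examples sharing a fixed predictor, treat the $b$-averaged gradient as a serial stochastic gradient with variance $\varp^2/b$, invoke the variance-dependent serial bound over $\lceil m/(b+\mu)\rceil$ rounds, and multiply by the $b+\mu$ examples per round. The asymptotic calculation for $b=m^\rho$ is also handled correctly, so there is nothing to add.
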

Note that for serial regret bounds of the form $2D^2\smoothp+2D\varp\sqrt{m}$, we indeed get an identical leading term in the regret bound for the DMB algorithm, implying its asymptotic optimality.

\section{Robust Learning with a Master-Workers Architecture}\label{sec:leader}

The DMB algorithm presented in \cite{DMB} assumes that all nodes are making similar progress. However, even in homogeneous systems, which are
designed to support synchronous programs, this is hard to achieve
(e.g., \cite{PKP03}), let alone grid environments in which each
node may have different capabilities. In this section, we present a variant of the DMB algorithm that adds the following properties:
\begin{itemize}
\item It performs on heterogeneous clusters, whose nodes may have varying processing rates.
\item It can handle dynamic network latencies.
\item It supports randomized update rules.
\item It can be made robust using standard fault tolerance techniques.
\end{itemize}

To provide these properties, we convert the DMB algorithm to work with
a single master and multiple workers. Each of the workers receives
inputs and processes them at its own pace. Periodically, the worker
sends the information it collected, i.e., the sum of gradients, to
the master. Once the master has collected sufficiently many gradients,
it performs an update and broadcasts the new predictor to the
workers. We call this algorithm the \emph{master-worker distributed
  mini-batches} (MaWo-DMB) algorithm. For a detailed description of
the algorithm, see \algref{alg:MaWoW} for the worker algorithm and
\algref{alg:MaWoM} for the master algorithm.

\begin{algorithm}[t]
\DontPrintSemicolon
initialize $w$\;
$j = 1$\;
count = 0\;
$\hat g$ = 0\;
\While{not end of data}
{
  \If {master message $\mathrm{m} = (w, j)$ and $\mathrm{m}.j > j$}
    {
       $w~:=\mathrm{m}.w$\;
       $j~:=~\mathrm{m}.j$\;
       $\hat g~:=~0$\;
       count$~:=~0$\;
    }
  \If {did not send message for the past $t$ time--units and count $>0$}
    {
      send the message ($\hat g$, count, $j$) to the master\;
       $\hat g~:=~0$\;
       count$~:=~0$ \;
    }
     predict $w$\;
     receive input $z$ and suffer loss $f(w,z)$\;
     compute gradient $\nabla_w f(w,z)$\;
     $\hat g ~:=~\hat g + \nabla_w f(w,z)$\;
     count $~:=~$ count + 1\;
}

 \caption{MaWo-DMB worker algorithm.}
 \label{alg:MaWoW}
 \end{algorithm}

\begin{algorithm}[t]
\DontPrintSemicolon
$j = 1$\;
count = 0\;
$\hat g$ = 0\;
\While{not end of data}
{
  receive message m = ($\hat g$, count, $j$) from a worker\;
  \If {$\mathrm{m}.j = j$}
  {
     $\hat g ~:=~ \hat g + \mathrm{m.}\hat g$ \;
     count $~:=~$ count + m.count \;
     \If {count $\geq$ b}
    {
       $\bar{g}_{j} ~:=~ \frac {\hat g}{\text{count}}$ \;
       use $\bar{g}_{j}$ to compute updated predictor $w_{j+1}$\;
       $j~:=~j+1$ \;
       count $~:=~$ 0\;
       $\hat g ~:=~ 0$\;
       broadcast ($w_{j}$,$j$)\;
     }
  }
}

 \caption{MaWo-DMB master algorithm.}
 \label{alg:MaWoM}
 \end{algorithm}

This algorithm uses a slightly different communication protocol than the DMB algorithm. We assume that the network supports two operations:
\begin{enumerate}
\item Broadcast master $\rightarrow$ workers: the master sends updates to the workers.
\item Message worker $\rightarrow$ master: periodically, each worker sends a message to the master with the sum of gradients it has collected so far.
\end{enumerate}
One possible method to implement these services is via a database. Using a database, each worker can update the gradients it collected on the database,
and check for updates from the master. At the same time, the master can check periodically to see if sufficiently many gradients have accumulated in the
database. When there are at least $b$ gradients accumulated, the master performs an update and posts the result in a designated place in the database.
This method provides nice robustness features to the algorithm, as discussed in \secref{sec:MW robust}.

\subsection{Properties of the MaWo-DMB algorithm}
The MaWo-DMB algorithm shares a similar asymptotic behavior as the DMB
algorithm (e.g. as discussed in \thmref{thm:synchronous}). The proof for the DMB algorithm applies to this algorithm as well. To get the optimal rate, we only need to bound the number $\mu$ of inputs whose gradient is not used in the computation of the next prediction point.  A coarse bound on this number can be given as follows: Let $M$ be the maximal number of inputs per time--unit. Let $T$ be the time difference between messages sent from each worker to the
master, let $\tau_u$ be the time it takes the master to perform an
update, and let $\tau_c$ be the maximal time it takes to send a
message between two points in the network. Using this notation, the
number of inputs dropped in each update is at most
$M(T+2\tau_c+\tau_u)$. Specifically, let $t$ be the time when the
master encounters the $b$'th gradient. Inputs that were processed
before time $t - T - \tau_c$ were received by the master. Moreover, at
time $t + \tau_u + \tau_c$ all of the workers have already received
the updated prediction point. Therefore, only inputs that were
processed between $t - T - \tau_c$ and $t + \tau_u + \tau_c$ might be
dropped.  Clearly, there are at most $M(T + 2\tau_c + \tau_u)$ such
inputs.

While asymptotically the MaWo-DMB algorithm exhibits the same performance as the DMB algorithm, it does have some additional features. First, it allows
workers of different abilities to be used. Indeed, if some workers can process more inputs than other workers, the algorithm can compensate for
that. Moreover, the algorithm does not assume that the number of inputs each worker handles is fixed in time. Furthermore, workers can be added
and removed during the execution of the algorithm.

The DMB algorithm assumes that the update rule is deterministic. This is essential since each node computes the update, and it is assumed that
they reach the same result. However, in the MaWo-DMB algorithm, only the master calculates the update and sends it to the rest of the nodes,
therefore, the nodes all share the same point even if the update rule is randomized.

%While having several nice properties, the MaWo-DMB algorithm might be sensitive since it has a single point of failure, the master. In the following
%subsection we present simple methods by which this weakness can be avoided. A more inherently robust approach is discussed in \secref{sec:async}.

\subsection{Adding Fault Tolerance to the MaWo-DMB algorithm}\label{sec:MW robust}
The MaWo-DMB algorithm is not sensitive to the stability of the workers. Indeed, workers may be added and removed during the execution of the algorithm.
However, if the master fails, the algorithm stops making updates. This is a standard problem in master-worker environments. It can be solved
using leader election algorithms such as the algorithm of \cite{GHS83}. If the workers do not receive any signal from the master
for a long period of time, they start a process by
which they elect a new leader (master). \cite{MWW00} proposed a leader election algorithm for ad-hoc networks. The advantage
of this kind of algorithm for our setting is that it can manage dynamic networks where the network can be partitioned and reconnected. Therefore,
if the network becomes partitioned, each connected component will have its own master.

Another way to introduce robustness to the MaWo-DMB algorithm is by selecting the master only when an update step is to be made. Assume that there
is a central database and all workers update it. Every $T$ time--units, each worker performs the following
\begin{enumerate}
\item lock the record in the database
\item add the gradients computed to the sum of gradients reported in the database
\item add the number of gradients to the count of the gradients reported in the database
\end{enumerate}
At this point, the worker checks if the count of gradients exceeds
$b$. If it does not, the worker releases the lock and returns to
processing inputs. However, if the number of gradients does exceed
$b$, the worker performs the update and broadcasts the new prediction
point (using the database) before unlocking the database and becoming
a worker again.

This simple modification we just described creates a distributed master such that any node in the system can be removed without
significantly affecting the progress of the algorithm. In a sense, we are leveraging the reliability of the database system  (see e.g., \cite{BHG87, DeanBrock, bigtable})
to convert our algorithm into a fault tolerant algorithm.

\section{Robust Learning with a Decentralized Architecture}\label{sec:async}

In the previous section, we discussed asynchronous algorithms based on a master-workers paradigm. Using off-the-shelf fault tolerance methods, one can design simple and robust variants, capable of coping with dynamic and heterogeneous networks.

That being said, this kind of approach also has some
limitations. First of all, access to a shared database may not be feasible, particularly in massively distributed environments. Second, utilizing leader  election algorithms is potentially wasteful, since by the
time a new master is elected, some workers or local worker groups
might have already accumulated more than enough gradients to perform a
gradient update. Moreover, what we really need is in fact more complex
than just electing a random node as a master: electing a
computationally weak or communication-constrained node will have
severe repercussions. Also, unless the communication network is
fully connected, we will need to form an entire DAG (directed acyclic graph)
to relay gradients from the workers to the elected master. While both
issues have been studied in the literature, it complicates the
algorithms and increases the time required for the election process,
again leading to potential waste. In terms of performance guarantees,
it is hard to come up with explicit time guarantees for these
algorithms, and hence the effect on the regret incurred by the system
is unclear.

In this section, we describe a robust, fully decentralized and
asynchronous version of DMB, which is not based on a master-worker
paradigm. We call this algorithm \emph{asynchronous} DMB, or ADMB for
brevity. We provide a formal analysis, including an explicit regret
guarantee, and show that ADMB shares the advantages of DMB in
terms of dependence on network size and communication latency.

\subsection{Description of the ADMB Algorithm}

We assume that communication between nodes takes place along some bounded-degree acyclic graph. In addition, each node has a unique numerical index. We will generally use $i$ to denote a given node's index, and let $j$ denote the index of some neighboring node.

Informally, the algorithm works as follows: each node $i$ receives examples, accumulates gradients with respect to its current predictor (which we shall denote as $w_i$), and uses batches of $b$ such gradients to update the predictor. Note that unlike the MaWo-DMB algorithm, here there is no centralized master node responsible for performing the update. Also, for technical reasons, the prediction themselves are not made with the current predictor $w_i$, but rather with a running average $\bar{w}_i$ of predictors computed so far.

Each node occasionally sends its current predictor and accumulated gradients to its neighboring nodes. Given a message from a node $j$, the receiving node $i$ compares its state to the state of node $j$. If $w_i=w_j$, then both nodes have been accumulating gradients with respect to the same predictor. Thus, node $i$ can use these gradients to update its own predictor $w_i$, so it stores these gradients. Later on, these gradients are sent in turn to node $i$'s neighbors, and so on. Each node keeps track of which gradients came from which neighboring nodes, and ensures that no gradient is ever sent back to the node from which it came. This allows for the gradients to propagate throughout the network.

An additional twist is that in the ADMB algorithm, we no longer insist on all nodes sharing the exact same predictor at any given time point. Of course, this can lead to each node using a different predictor, so no node will be able to use the gradients of any other node, and the system will behave as if the nodes all run in isolation. To prevent this, we add a mechanism, which ensures
that if a node $i$ receives from a neighbor node $j$ a ``better'' predictor than its current one, it will switch to using node $j$'s predictor. By ``better'', we mean one of two things: either $w_j$ was obtained based on more predictor updates, or $j<i$. In the former case, $w_j,\bar{w}_j$ should indeed be better, since they are based on more updates. In the latter case, there is no real reason to prefer one or the other, but we use an order of precedence between the nodes to determine who should synchronize with whom. With this mechanism, the predictor with the most gradient updates is propagated quickly throughout the system, so either everyone starts working with this predictor and share gradients, or an even better predictor is obtained somewhere in the system, and is then quickly propagated in turn - a win-win situation.

We now turn to describe the algorithm formally. The algorithm has two  global parameters:
 \begin{itemize}
    \item $b$: As in the DMB algorithm, $b$ is the number of gradients whose average is used to update the predictor.
    \item $t$: This parameter regulates the communication rate between the nodes. Each node $i$ will send message to its neighbor every $t$ time--units.
 \end{itemize}

Each node $i$ maintains the following data structures:
 \begin{itemize}
     \item A \emph{node state} $S_i=(w_i,\bar{w}_i,v_i)$, where
      \begin{itemize}
          \item $w_i$ is the current predictor.
          \item $\bar{w}_i$ is the running average of predictors actually used for prediction.
          \item $v_i$ counts how many predictors are averaged in $\bar{w}_i$. This is also the number of updates performed according to the online update rule, in order to obtain $w_i$.
      \end{itemize}
      \item A vector $g_i$ and associated counter $c_i$, which hold the sum of gradients computed from inputs serviced by node $i$.
      \item For each neighboring node $j$, a vector $g_i^j$ and associated counter $c_i^j$, which hold the sum of gradients received from node $j$.
 \end{itemize}

When a node $i$ is initialized, all the variables discussed above are set to zero, The node then begins the execution of the algorithm. The protocol is composed of executing three event-driven functions: the first function (Algorithm \ref{alg:asyncfunc} below) is executed when a new request for prediction arrives, and handles the processing of that example. The second function (Algorithm \ref{alg:asyncsend}) is executed every $t$ time--units, and sends messages to the node's neighbors. The third function (Algorithm \ref{alg:asyncreceive}) is executed when a message arrives from a neighboring node. Also, the functions use a subroutine \texttt{update\_predictor} (Algorithm \ref{alg:updatepredictor}) to update the node's predictor if needed. For simplicity, we will assume that each of those three functions is executed atomically (namely, only one of the function runs at any given time). While this assumption can be easily relaxed, it allows us to avoid a tedious discussion of shared resource synchronization between the functions.

\begin{algorithm}
\DontPrintSemicolon
Predict using $\bar{w}_i$\;
Receive input $z$, suffer loss and compute gradient $\nabla_{w} f(w_i,z)$\;
$g_i:=g_i+\nabla_{w} f(w_i,z)$~,~$c_i:=c_i+1$\;
\If{$c_i+\sum_j c_i^j \geq b$}{\texttt{update\_predictor\;}}
\caption{ADMB Algorithm: Handling a new request} \label{alg:asyncfunc}
\end{algorithm}

\begin{algorithm}
\DontPrintSemicolon
For each neighboring node $j'$, send message $\left(i,S_i,g_i+\sum_{j\neq j'}g_i^j,c_i+\sum_{j\neq j'}c_i^j\right)$
\caption{ADMB Algorithm: Sending Messages (Every $t$ Time--Units) } \label{alg:asyncsend}
\end{algorithm}

\begin{algorithm}
 \DontPrintSemicolon
Let $(j,S_j,g,c)$ be the received message\;
\eIf{$S_j.v_j>v_i$ or ($S_j.v_j=v_i$ and $S_j.w_j\neq w_i$ and $j<i$)}
    {
    $S_i:=S_j$~,~$g_i:=0$~,~$c_i:=0$\;
    $\forall j$~~$g_i^j:=g$~,~$c_i^j:=c$\;
    }
{
\If{$S_j.w_j=w_i$}
{
    $g_i^j=g$~,~$c_i^j=c$\;
    \If{$c_i+\sum_j c_i^j \geq b$}
        {\texttt{update\_predictor}\;}
}
}
\caption{ADMB Algorithm: Processing Incoming Message} \label{alg:asyncreceive}
\end{algorithm}

\begin{algorithm}
\DontPrintSemicolon
use averaged gradient $\frac{g_i+\sum_j g_i^j}{c_i+\sum_j c_i^j}$ to compute updated predictor $w_{i}$\;
$\bar{w}_i ~:=~ \frac{v_i}{v_i+1} \bar{w}_i+\frac{1}{v_i+1}w_i$\;
$v_i:= v_i+1$~,~$g_i:= 0$~,~$c_i:= 0$\;
$\forall j$~~$g_i^j:= 0$~,~$c_i^j:= 0$\;
\caption{\texttt{update\_predictor} Subroutine} \label{alg:updatepredictor}
\end{algorithm}

It is not hard to verify that due to the acyclic structure of the network, no single gradient is ever propagated to the same node twice. Thus, the algorithm indeed works correctly, in the sense that the updates are always performed based on independent gradients. Moreover, the algorithm is well-behaved in terms of traffic volume over the network, since any communication link from node $i$ to node $j$ passes at most $1$ message every $t$ time--units, where  $t$ is a tunable parameter.

As with the MaWo-DMB algorithm, the ADMB algorithm has some desirable robustness properties, such as heterogeneous nodes and adding/removing new nodes, and communication latencies. Moreover, it is robust to network failures: even if the the network is split into two (or more) partitions, it only means we end up with two (or more) networks which implement the algorithm in isolation. The system can continue to run and its output will remain valid, although the predictor update rate will become somewhat slower, until the failed node is replaced. Note that unlike the MaWo-DMB algorithm, there is no need to wait until a master node is elected.

\subsection{Analysis}\label{subsec:analysis}

We now turn to discuss the regret performance of the algorithm. Before we begin, it is important to understand what kind of guarantees are possible in such a setting. In particular, it is not possible to provide a total regret bound over all the examples fed to the system, since we have not specified what happens to the examples which were sent to malfunctioning nodes - whether they were dropped, rerouted to a different node and so on. Moreover, even if nodes behave properly in terms of processing incoming examples, the performance of components such as interaction with neighboring nodes might vary over time in complex ways, which are hard to model precisely.

Instead, we will isolate a set of ``well-behaved'' nodes, and focus on the regret incurred on the examples sent to these nodes. The underlying assumption is that the system is mostly functional for most of the time, so the large majority of examples are processed by such well-behaved nodes. The analysis will focus on obtaining regret bounds over these examples.

To that end, let us focus on a particular set of $k'$ nodes, which form a connected component of the communication framework, with diameter $d'$. We will define the nodes as \emph{good}, if all those nodes implement the ADMB algorithm at a reasonably fast rate. More precisely, we will require the following from each of the $k'$ nodes:
\begin{itemize}
\item Executing each of the three functions defining the ADMB algorithm takes at most one time--unit.
\item The communication latency between two adjacent nodes is at most one time--unit.
\item The $k'$ nodes receive at most $M$ examples every time--unit.
\end{itemize}
As to other nodes, we only assume that the messages they send to the good
nodes reflect a correct node state, as specified earlier. In particular, they may be arbitrarily slow or even completely unresponsive.

First, we show that when the nodes are good, up-to-date predictors from any single node will be rapidly propagated to all the other nodes. This shows that the system has good recovery properties (e.g. after most nodes fail).

\begin{lemma}\label{lem:predprop}
Assume that at some time point, the $k'$ nodes are good, and at least one of them has a predictor based on at least $v$ updates. If the nodes remain good for at least $(t+2)d'$ time--units, then all nodes will have a predictor based on at least $v$ updates.
\end{lemma}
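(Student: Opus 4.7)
The plan is to prove the lemma by induction on graph distance from a fixed ``source'' node that already satisfies $v_i \geq v$ at the starting time point $\tau_0$, using as a key lemma the fact that at any good node the counter $v_i$ is a non-decreasing function of time. The monotonicity step comes first: inspecting Algorithm~\ref{alg:asyncreceive}, the only way $S_i$ (and hence $v_i$) is overwritten when a message from $j$ is processed is via the first branch, which requires either $S_j.v_j>v_i$ (so $v_i$ strictly grows) or $S_j.v_j=v_i$ together with a tie-breaking condition (so $v_i$ is unchanged). Algorithm~\ref{alg:updatepredictor}, the only other place $v_i$ is touched, only increments it. So for every good node, $v_i$ never decreases during the $(t+2)d'$-time-unit window.

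Next, I would induct on $k=0,1,\ldots,d'$ with the claim: at time $\tau_0 + k(t+2)$, every node at graph distance at most $k$ from the source $i_0$ satisfies $v_i \geq v$. The base case $k=0$ is the hypothesis of the lemma combined with monotonicity. For the inductive step, pick a node $i'$ at distance $k+1$ and a neighbor $i$ at distance $k$; by the inductive hypothesis $v_i \geq v$ at time $\tau_0+k(t+2)$. Because each good node sends a message on each outgoing edge at least once every $t$ time--units (Algorithm~\ref{alg:asyncsend}), within at most $t$ time units $i$ transmits its current state to $i'$, and by monotonicity the $v$-component of that state is still at least $v$. The latency and processing assumptions add at most one time--unit each, so $i'$ finishes processing the message by time $\tau_0+(k+1)(t+2)$. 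When it does, either $S_i.v_i > v_{i'}$ and the first branch of Algorithm~\ref{alg:asyncreceive} fires (so $i'$ copies $S_i$ and obtains $v_{i'} \geq v$), or $S_i.v_i \leq v_{i'}$, in which case $v_{i'} \geq S_i.v_i \geq v$ already. Either way, the invariant at step $k+1$ holds.

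Taking $k=d'$ gives the lemma, since by definition every node in the connected component is within distance $d'$ of $i_0$. The main obstacle I anticipate is purely bookkeeping: during the $(t+2)d'$-time--unit window many things happen in parallel (multiple good nodes may independently run \texttt{update\_predictor}, messages cross in flight, tie-breaking via node index may cause lateral state swaps), and one must be careful that none of these events can decrease the best $v$-value reached along any path from $i_0$ to a given target. Monotonicity is exactly what rules out such regressions, which is why I would establish it up front and then treat the propagation argument as a straightforward ``one hop per $t+2$ time--units'' flood.
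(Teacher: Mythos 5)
Your proof is correct and follows essentially the same route as the paper's: a one-hop-per-$(t+2)$-time--units flood argument from the node that already has $v$ updates, terminating after $d'$ hops. The only difference is presentational --- you isolate the monotonicity of $v_i$ as an explicit preliminary step and phrase the propagation as a formal induction on graph distance, whereas the paper states the same argument informally.
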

\begin{proof}
Let $i$ be the node with the predictor having at least $v$ updates. Counting from the time point defined in the lemma, at most $t+2$ time--units will elapse until all of node $i$'s neighbors will receive a message from node $i$ with its predictor, and either switch to this predictor (and then will have a predictor with $v$ updates), or remain with the same predictor (and this can only happen if its predictor was already based on $v$ updates). In any case, during the next $t+2$ time--units, each of those neighboring nodes will send a message to its own neighbors, and so on. Since the distance between any two nodes is at most $d'$, the result follows.
\end{proof}

The next result shows that when all nodes are good and have a predictor based on at least $v$ updates, not too much time will pass until they will all update their predictor.

\begin{theorem}\label{thm:fastupdates}
Assume that at some time point, the $k'$ nodes are good, and every one of them has a predictor with $\geq v$ updates (not necessarily the same one). Then after the nodes process at most
\[
b+2(t+2)d'M
\]
additional examples, all $k'$ nodes will have a predictor based on at least $v+1$ updates.
\end{theorem}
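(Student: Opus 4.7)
The plan is to combine Lemma~\ref{lem:predprop} with an accounting argument based on the acyclicity of the communication graph. First, I would show that within $(t+2)d'$ additional time-units from the starting point, one of two scenarios must hold: (a)~some node has already completed an update and reached $\geq v+1$ updates, or (b)~every one of the $k'$ good nodes has synchronized on a single common predictor $w^\star$ with exactly $v$ updates. Scenario~(b) is proved exactly as in Lemma~\ref{lem:predprop}, but invoking the full precedence rule of Algorithm~\ref{alg:asyncreceive} ($v_j>v_i$, or $v_j=v_i$ with $j<i$): the predictor of the lowest-indexed good node wins the tiebreaker and is adopted by every other good node within $(t+2)d'$ time-units. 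In scenario~(a), Lemma~\ref{lem:predprop} applies directly and the theorem follows within another $(t+2)d'$ time-units, consuming at most $2(t+2)d'M \leq b+2(t+2)d'M$ examples, so only case~(b) is nontrivial.

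From the moment synchronization is achieved, every node accumulates gradients with respect to $w^\star$. Because the communication graph is acyclic and the outgoing messages have the form $g_i+\sum_{j'\neq j}g_i^{j'}$, each post-synchronization gradient is counted at most once inside every node's running total $c_i+\sum_j c_i^j$. I would argue next that after $b$ further gradients have been produced anywhere in the component and given at most $(t+2)d'$ additional time to flow to a common node, some node's total hits $b$ and triggers \texttt{update\_predictor}, pushing that node to $v+1$ updates. A final invocation of Lemma~\ref{lem:predprop} then propagates the $(v+1)$-updated state to every node within another $(t+2)d'$ time-units. Summing the three contributions---$(t+2)d'M$ for synchronization, $b$ for the accumulation of the next mini-batch, and $(t+2)d'M$ for the final propagation---matches the claimed bound $b+2(t+2)d'M$.

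The hard part will be honest bookkeeping in the accumulation step: a naive reading also pays an extra $(t+2)d'M$ after the $b$-th post-sync gradient is produced, before any single node's counter actually reaches $b$, which would give the looser bound $b+3(t+2)d'M$. To kill this third term, I expect to exploit the fact that the switch branch of Algorithm~\ref{alg:asyncreceive} already forwards a winning neighbor's accumulated gradients along with its state, so that the gradients collected by the winner \emph{during} the synchronization window are effectively delivered to every other node at the instant it switches. This overlap lets the tail of Phase~1 pay for the head of Phase~2, leaving only two factors of $(t+2)d'M$ in the final bound.
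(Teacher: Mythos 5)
Your first phase (within $(t+2)d'$ time-units, either some node already has $\geq v+1$ updates or all $k'$ nodes synchronize on the predictor of the lowest-indexed node $i_0$) is exactly the paper's argument, and your handling of the ``some node already advanced'' branch is also fine. The gap is in how you close the synchronized case. Your accounting there is: $b$ gradients must be produced, they must flow to \emph{some} node whose counter then reaches $b$ (costing up to $(t+2)d'M$ examples), and then the freshly updated predictor must propagate back out via Lemma~\ref{lem:predprop} (another $(t+2)d'M$). You correctly flag that this gives $b+3(t+2)d'M$, but the patch you propose --- letting gradients accumulated during the synchronization window ride along with the switch messages so that the tail of Phase~1 pays for the head of Phase~2 --- does not survive a worst case. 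The bound $M$ is on the total arrival rate over the whole component, and nothing prevents essentially all examples during the sync window from landing on nodes that have not yet switched to the winning predictor; those gradients are discarded the moment the node switches (Algorithm~\ref{alg:asyncreceive} sets $g_i:=0$ in the switch branch), so the overlap can contribute arbitrarily little and you are left with $b+3(t+2)d'M$.

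The paper closes this differently, and the idea is the one you are missing: after synchronization there is no ``one node updates, then propagates'' step at all. Since every gradient produced anywhere in the component reaches \emph{every} one of the $k'$ nodes within $(t+2)d'$ time-units, each node independently accumulates the same first $b$ post-synchronization gradients and independently fires \texttt{update\_predictor} by the time $(t+2)d'$ time-units have elapsed after the $b$-th such gradient was produced. The ``flow to a common node'' cost and the ``final propagation'' cost are therefore the \emph{same} $(t+2)d'M$ term, not two separate ones, giving $(t+2)d'M + b + (t+2)d'M = b+2(t+2)d'M$. (A node whose counter has not yet reached $b$ but which receives a $(v+1)$-update predictor from a neighbor simply switches, which also suffices.) Replacing your accumulation-plus-propagation step with this ``everyone updates in place'' argument makes the rest of your proof go through.
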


\begin{proof}
Consider the time point mentioned in the theorem, where every one of the $k'$ nodes, and in particular the node $i_0$ with smallest index among them, has a predictor with $\geq v$ updates. We now claim that after processing at most
\begin{equation}\label{eq:timespan2}
(t+2)d' M
\end{equation}
examples, either some node in our set had a predictor with $\geq v+1$ updates, or every node has the same predictor based on $v$ updates. The argument is similar to \lemref{lem:predprop}, since everyone will switch to the predictor propagated from node $i_0$, assuming no predictor obtained a predictor with more updates. Therefore, at most $(t+2)d'$ time--units will pass, during which at most $(t+2)d' M$ examples are processed.

So suppose we are now at the time point, where either some node had a predictor with $\geq v+1$ updates, or every node had the same predictor based on $v$ updates. We now claim that after processing at most
\begin{equation}\label{eq:timespan3}
b+(t+2)d'M
\end{equation}
examples, any node in our set obtained a predictor with $\geq v+1$ updates. To justify \eqref{eq:timespan3}, let us consider first the case where every node had the same predictor based on $v$ updates. As shown above, the number of time--units it takes any single gradient to propagate to all $k'$ nodes is at most $(t+2)d'$. Therefore, after $T$ time--units elapsed, each node will accumulate and act upon all the gradients computed by all nodes up to time $T-(t+2)d'$. Since at most $M$ examples are processed each time--unit, it follows that after processing at most $b+(t+2)d'M$ examples, all nodes will update their predictors, as stated in \eqref{eq:timespan3}.

We still need to consider the second case, namely that some good node had a predictor with $\geq v+1$ updates, and we want to bound the number of examples processed till all nodes have a predictor with $\geq v+1$ updates. But this was already calculated to be at most $(t+2)d'M$, which is smaller than  \eqref{eq:timespan3}. Thus, the time bound in \eqref{eq:timespan3} covers this case as well.

Adding \eqref{eq:timespan2} and \eqref{eq:timespan3}, the theorem follows.
\end{proof}

With these results in hand, we can now prove a regret bound for our algorithm. To do so, define a \emph{good time period} to be a time during which:
\begin{itemize}
    \item All $k'$ nodes are good, and were also good for $(t+2)d'$ time--units prior to that time period.
    \item The $k'$ nodes handled $b+2(t+2)d'M$ examples overall.
\end{itemize}
As to other time periods, we will only assume that at least \emph{one} of the $k'$ nodes remained operational and implemented the ADMB algorithm (at an arbitrarily slow rate).

\begin{theorem}\label{thm:asyncregret}
%Let $f$ be an $\smoothp$-smooth convex loss function and assume the input stream is such that $\nabla_w f(w,z_i)$ has $\varp^2$-bounded variance.
Suppose the gradient-based update rule has the serial regret bound $\regretbound(\varp^2, m)$, and that for any $\varp^2$, $\frac{1}{m}\regretbound(\varp^2,m)$ decreases monotonically in $m$.

Let $m$ be the number of examples handled during a sequence of non-overlapping good time periods. Then the expected regret with respect to these examples is at most
\[
\sum_{j=1}^{\ceil{m/\mu}}\frac{\mu}{j}\regretbound\left(\frac{\varp^2}{b},j\right),
\]
where $\mu=b+2(t+2)d'M$.
Specifically, if $\regretbound(\varp^2, m)=2D^2\smoothp+2D\varp\sqrt{m}$, then the expected regret bound is
\[
2D^2L(b+2(t+2)d'M)(1+\log(m))+
4D\sigma\sqrt{\left(1+\frac{2(t+2)d'M}{b}\right)m}
\]
\end{theorem}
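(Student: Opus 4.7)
The plan is to partition the $m$ examples handled during the good time periods according to the update count $v_i$ recorded at the servicing node at the moment of prediction, and to bound the regret contribution of each group separately by combining convexity of $f$ with the serial regret bound applied to a single globally consistent sequence of predictors.

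First, I would argue that because the communication graph is acyclic and because of the tie-breaking rule in \algref{alg:asyncreceive}, at any instant inside the good periods there is a well-defined global sequence $w^{(1)}, w^{(2)},\ldots$ with the property that whenever a node $i$ has update count $v_i = k$ its current predictor equals $w^{(k)}$ and its running average satisfies $\bar w_i = \frac{1}{k}\sum_{\ell=1}^k w^{(\ell)}$. The acyclicity, together with the sending rule in \algref{alg:asyncsend} that excludes from the message to neighbour $j'$ the gradients received from $j'$, guarantees that no gradient is ever counted twice inside a batch; the tie-breaking rule collapses any two lineages with identical $v$ into one, so that each $w^{(k+1)}$ is produced from $w^{(k)}$ via the serial update rule applied to the mean of at least $b$ independent stochastic gradients at $w^{(k)}$. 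The resulting batch mean has variance at most $\varp^2/b$, so the serial regret bound applied to this virtual sequence gives $\sum_{\ell=1}^k \E\bigl[f(w^{(\ell)},z) - f(w^\star,z)\bigr] \leq \regretbound(\varp^2/b, k)$ for an independent test example $z$, using the i.i.d.\ assumption to swap the test example for the examples actually consumed along the virtual run.

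Next I would combine \lemref{lem:predprop} and \thmref{thm:fastupdates} to show that for every level $j\geq 1$ the total number of examples predicted across all $k'$ nodes while the servicing counter equals $j$ is at most $\mu := b + 2(t+2)d'M$. The padding of $(t+2)d'$ time-units built into the definition of a good time period is exactly what \lemref{lem:predprop} needs in order to align all nodes at level $j$ before a new period starts, after which \thmref{thm:fastupdates} pushes them to level $j+1$ within $\mu$ additional examples. Since only $m$ examples are processed in total, only levels $j \in \{1,\ldots,\lceil m/\mu\rceil\}$ appear. Applying Jensen's inequality to $\bar w_i$ and the convex function $f(\cdot,z)$ gives, for an example predicted at level $v_i$, an instantaneous expected regret bounded by $\regretbound(\varp^2/b, v_i)/v_i$; the monotonicity hypothesis on $\frac{1}{m}\regretbound(\varp^2/b, m)$ upgrades this to $\regretbound(\varp^2/b, j)/j$ whenever $v_i \geq j$. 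Multiplying by at most $\mu$ examples per level and summing over $j$ produces the first displayed bound.

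For the explicit expression it only remains to plug in $\regretbound(\varp^2,m) = 2D^2\smoothp + 2D\varp\sqrt{m}$. The constant piece contributes $2D^2\smoothp\,\mu \sum_{j=1}^{\lceil m/\mu\rceil} 1/j \leq 2D^2\smoothp\,\mu(1+\log m)$, and the $\sqrt{\cdot}$ piece contributes $(2D\varp\mu/\sqrt{b}) \sum_{j=1}^{\lceil m/\mu\rceil} 1/\sqrt{j} \leq 4D\varp\sqrt{\mu m/b}$; rewriting $\mu/b = 1 + 2(t+2)d'M/b$ recovers the stated formula. The hard step is the construction of the globally consistent sequence $\{w^{(k)}\}$ and the verification that \algref{alg:asyncreceive} never allows two incompatible lineages with the same update count to coexist long enough to invalidate this invariant; once the invariant is in place, the remainder is essentially the synchronous DMB regret analysis applied level by level, but establishing it requires careful bookkeeping that rests on the acyclicity of the network and on the precise form of the messages sent by \algref{alg:asyncsend}.
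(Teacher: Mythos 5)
Your proof has the same overall skeleton as the paper's (Jensen's inequality on the averaged predictor, the serial regret bound applied to a virtual sequence of predictors each obtained from its predecessor via a batch of $\geq b$ gradients of variance $\leq \varp^2/b$, the monotonicity hypothesis to pass from $j'$ to $j$, and at most $\mu$ examples per unit of progress), and your computation of the explicit bound is the same. But the step you yourself flag as ``the hard step'' --- the existence of a single globally consistent sequence $w^{(1)},w^{(2)},\ldots$ such that \emph{every} node with $v_i=k$ holds $w_i=w^{(k)}$ --- is not merely hard, it is false for the ADMB algorithm. Two distant nodes can each accumulate $b$ gradients locally and perform their $k$-th update on disjoint batches, producing two different predictors that both carry update count $k$; the tie-breaking rule in \algref{alg:asyncreceive} only reconciles them once messages propagate between them, and in the interim both lineages coexist and both are used for prediction. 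The paper's own description explicitly anticipates this (``this can lead to each node using a different predictor''), so no amount of bookkeeping will establish your invariant.

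The fix --- and this is what the paper actually does --- is to drop the global sequence entirely and reason per lineage. Each node's state $S_i=(w_i,\bar{w}_i,v_i)$ is either inherited wholesale from a neighbor or advanced by a valid batch update, so each node's $\bar{w}_i$ at any moment is the average of $j'$ predictors $w_1,\ldots,w_{j'}$ forming \emph{that node's own} lineage, with each $w_{p+1}$ obtained from $w_p$ using $b_p\geq b$ independent gradients. The bound $\frac{1}{j'}\regretbound(\varp^2/b,j')\leq\frac{1}{j}\regretbound(\varp^2/b,j)$ holds uniformly over every such lineage with $j'\geq j$, so it does not matter that different nodes follow different lineages: \lemref{lem:predprop} and \thmref{thm:fastupdates} guarantee that by the $j$-th good period every node's counter is at least $j$, which is all that is needed. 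With that substitution your argument goes through; the rest of your proof (the per-level counting via $\mu$, the harmonic and $\sum 1/\sqrt{j}$ sums) matches the paper.
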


When the batch size $b$ scales as $m^{\rho}$ for any $\rho\in (0,1/2)$, we get an asymptotic regret bound of the form $4D\sigma\sqrt{m}+o(\sqrt{m})$. The leading term is virtually the same as the leading term in the serial regret bound. The only difference is an additional factor of $2$, essentially due to the fact that we need to average the predictors obtained so far to make the analysis go through, rather than just using the last predictor.

\begin{proof}
Let us number the good time periods as $j=1,2,\ldots$, and let $\bar{w}_j$ be a predictor used by one of the nodes at the beginning of the $j$-th good time period. From \lemref{lem:predprop} and \thmref{thm:fastupdates}, we know that the predictors used by the nodes were updated at least once during each period. Thus, $\bar{w}_j$ is the average of $j'\geq j$ predictors $w_1,w_2,\ldots,w_{j'}$, where each $w_{p+1}$ was obtained from the previous $w_{p}$ using $b_p\geq b$ gradients each, on some examples which we shall denote as $z_{p,1},z_{p,2},\ldots,z_{p,b_p}$. Since $w_p$ is independent of these examples, we get
\[
\E\left[\frac{1}{b_p}\sum_{q=1}^{b_p}f(w_p,z_{p,q})-f(w^\star,z_{p,q})~\big| w_p\right] = \E[f(w_p,z)-f(w^\star,z)\big|w_p].
\]
Based on this observation and Jensen's inequality, we have
\begin{align}
&\E\left[f(\bar{w}_j,z)-f(w^\star,z)\right]\notag\\
&\leq \frac{1}{j'}\E\left[\sum_{p=1}^{j'}f(w_p,z)-f(w^\star,z)\right]\notag\\
&= \frac{1}{j'}\E\left[\sum_{p=1}^{j'}\frac{1}{b_p}
\sum_{q=1}^{b_p}f(w_p,z_{p,q})-f(w^\star,z_{p,q})\right].\label{eq:jen}
\end{align}
The online update rule was performed on the averaged gradients obtained from $z_{p,1},\ldots,z_{p,b_p}$. This average gradient is equal to the gradient of the function $\frac{1}{b_p}\sum_{q=1}^{b_p}f(w_p,z_{p,q})$. Moreover, the variance of this gradient is at most $\varp^2/b_p\leq \varp^2/b$. Using the regret guarantee, we can upper bound \eqref{eq:jen} by
\[
\frac{1}{j'}\regretbound\left(\frac{\varp^2}{b},j'\right).
\]
Since $j'\geq j$, and since we assumed in the theorem statement that the expression above is monotonically decreasing in $j'$, we can upper bound it by
\[
\frac{1}{j}\regretbound\left(\frac{\varp^2}{b},j\right).
\]
From this sequence of inequalities, we get that for \emph{any} example processed by one of the $k'$ nodes during the good time period $j$, it holds that
\begin{equation}\label{eq:epochregret}
\E\left[f(\bar{w}_j,z)-f(w^\star,z)\right]\leq \frac{1}{j}\regretbound\left(\frac{\varp^2}{b},j\right).
\end{equation}
Let $\mu=b+2(t+2)d'M$ be the number of examples processed during each good time period. Since $m$ examples are processed overall, the total regret over all these examples is at most
\begin{equation}\label{eq:regretfinal}
\sum_{j=1}^{\ceil{m/\mu}}\frac{\mu}{j}\regretbound\left(\frac{\varp^2}{b},j\right).
\end{equation}

To get the specific regret form when $\regretbound(\varp^2,m)=2D^2L+2D\sigma\sqrt{m}$, we substitute into \eqref{eq:regretfinal}, and substitute $\mu=b+2(t+2)d'M$ to get
\begin{align*}
&\sum_{j=1}^{\ceil{m/\mu}}\left(2D^2L\frac{\mu}{j}+\frac{2D\sigma\mu}{\sqrt{b}}
\frac{1}{\sqrt{j}}\right)\\
&\leq 2D^2L\mu(1+\log(m))+
\frac{4D\sigma\mu}{\sqrt{b}}\sqrt{\frac{m}{\mu}}\\
&= 2D^2L(b+2(t+2)d'M)(1+\log(m))+
4D\sigma\sqrt{\left(1+\frac{2(t+2)d'M}{b}\right)m}.
\end{align*}
\end{proof}

\bibliographystyle{plain}
\bibliography{mybib}
\end{document}